\providecommand{\tabularnewline}{\\}
\providecommand{\algorithmname}{Algorithm}
  \theoremstyle{definition}
  \newtheorem{defn}{\protect\definitionname}
  \theoremstyle{definition}
  \newtheorem*{example*}{\protect\examplename}
  \theoremstyle{plain}
  \newtheorem{lem}{\protect\lemmaname}
\theoremstyle{plain}
\newtheorem{thm}{\protect\theoremname}
  \theoremstyle{plain}
  \newtheorem{cor}{\protect\corollaryname}
\title{Count-Based Exploration in Feature Space for Reinforcement Learning\thanks{This work was supported in part by ARC DP150104590.}}
\author{Jarryd Martin, Suraj Narayanan S., Tom Everitt, Marcus Hutter\\
Research School of Computer Science, Australian National University, Canberra\\
jarrydmartinx@gmail.com, surajx@gmail.com, tom.everitt@anu.edu.au, marcus.hutter@anu.edu.au}
  \providecommand{\definitionname}{Definition}
  \providecommand{\examplename}{Example}
  \providecommand{\lemmaname}{Lemma}
\providecommand{\corollaryname}{Corollary}
\providecommand{\theoremname}{Theorem}
\begin{document}
\maketitle
\begin{abstract}
We introduce a new count-based optimistic exploration algorithm for
reinforcement learning (RL) that is feasible in environments with
high-dimensional state-action spaces. The success of RL algorithms
in these domains depends crucially on generalisation from limited
training experience. Function approximation techniques enable RL agents
to generalise in order to estimate the \emph{value} of unvisited states,
but at present few methods enable generalisation regarding\emph{ uncertainty}.
This has prevented the combination of scalable RL algorithms with
efficient \emph{exploration} strategies that drive the agent to reduce
its uncertainty. We present a new method for computing a generalised
state visit-count, which allows the agent to estimate the uncertainty
associated with any state. Our\emph{ $\phi$-pseudocount} achieves
generalisation by exploiting the same feature representation of the
state space that is used for value function approximation. States
that have less frequently observed features are deemed more uncertain.
The \emph{$\phi$-Exploration-Bonus algorithm} rewards the agent for
exploring in feature space rather than in the untransformed state
space. The method is simpler and less computationally expensive than
some previous proposals, and achieves near state-of-the-art results
on high-dimensional RL benchmarks.
\end{abstract}

\section{Introduction}

Reinforcement learning (RL) methods have recently enjoyed widely publicised
success in domains that once seemed far beyond their reach \cite{Mnih2015}.
Much of this progress is due to the application of modern function
approximation techniques to the problem of policy evaluation for Markov
Decision Processes (MDPs) \cite{Sutton1998}. These techniques address
a key shortcoming of tabular MDP solution methods: their inability
to generalise what is learnt from one context to another. This sort
of generalisation is crucial if the state-action space of the MDP
is large, because the agent typically only visits a small subset of
that space during training.

Comparatively little progress has been made on the problem of efficient
\emph{exploration} in large domains. Even algorithms that use sophisticated
nonlinear methods for policy evaluation tend to use very old, inefficient
exploration techniques, such as the $\epsilon$-greedy strategy \cite{HGS:2016doubleQ,MBMG+2016DQN,DBLP:journals/corr/NairSBAFMPSBPLM15}.
There are more efficient tabular \emph{count-based }exploration algorithms
for finite MDPs, which drive the agent to reduce its uncertainty by
visiting states that have low visit-counts \cite{Strehl2008}. However,
these algorithms are often ineffective in MDPs with high-dimensional
state-action spaces, because most states are never visited during
training, and the visit-count remains at zero nearly everywhere.

Count-based exploration algorithms have only very recently been successfully
adapted for these large problems \cite{Bellemare2016,Tang2016exploration}.
Just as function approximation techniques achieve generalisation across
the state space regarding value, these algorithms achieve generalisation
regarding \emph{uncertainty}. The breakthrough has been the development
of \emph{generalised state visit-counts, }which are larger for states
that are more \emph{similar} to visited states, and which can be nonzero
for unvisited states. The key challenge is to compute an appropriate
similarity measure in an efficient way, such that these exploration
methods can be combined with scalable RL algorithms. It soon becomes
infeasible, for example, to do so by storing the entire history of
visited states and comparing each new state to those in the history.
The most promising proposals instead compute generalised counts from
a compressed representation of the history of visited states -- for
example, by constructing a \emph{visit-density model} over the state
space and deriving a ``pseudocount'' \cite{Bellemare2016,DBLP:journals/corr/OstrovskiBOM17},
or by using locality-sensitive hashing to cluster states and counting
the occurrences in each cluster \cite{Tang2016exploration}.

This paper presents a new count-based exploration algorithm that is
feasible in environments with large state-action spaces. It can be
combined with any value-based RL algorithm that uses linear function
approximation (LFA). Our principal contribution is a \emph{new method
for computing generalised visit-counts}. Following \cite{Bellemare2016},
we construct a visit-density model in order to measure the similarity
between states. Our approach departs from theirs in that we do \emph{not}
construct our density model over\emph{ }the raw state space. Instead,
we exploit the feature map that is used for value function approximation,
and construct a density model over the transformed \emph{feature space}.
This model assigns higher probability to state feature vectors that
\emph{share features} with visited states. Generalised visit-counts
are then computed from these probabilities; states with frequently
observed features are assigned higher counts. These counts serve as
a measure of the uncertainty associated with a state. \emph{Exploration
bonuses} are then computed from these counts in order to encourage
the agent to visit regions of the state-space with less familiar features.

Our density model can be trivially derived from \emph{any }feature
map used for LFA, regardless of the application domain, and requires
little or no additional design. In contrast to existing algorithms,
there is no need to perform a special dimensionality reduction of
the state space in order to compute our generalised visit-counts.
Our method uses the same lower-dimensional feature representation
to estimate value \emph{and }to estimate uncertainty. This makes it
simpler to implement and less computationally expensive than some
existing proposals. Our evaluation demonstrates that this simple approach
achieves near state-of-the-art performance on high-dimensional RL
benchmarks.

\section{Background and Related Work}

\subsection{Reinforcement Learning}

The reinforcement learning (RL) problem formalises the task of learning
from interaction to achieve a goal \cite{Sutton1998}. It is usually
formulated as an MDP $\langle\mathcal{S},\mathcal{A},\mathcal{P},\mathcal{R},\gamma\rangle$,
where $\mathcal{S}$ is the set of states of the environment, $\mathcal{A}$
is the set of available actions, $\mathcal{P}:(S\times\mathcal{A})\times S\to[0,1]$
is the state transition distribution, $\mathcal{R}:(\mathcal{S}\times\mathcal{A})\times\mathcal{S}\to\mathbb{R}$
is the reward function, and $\gamma$ is the discount factor. The
agent is formally a \emph{policy }$\pi:\mathcal{S}\to\mathcal{A}$
that maps a state to an action. At timestep $t$, the agent is in
a \emph{state} $s_{t}\in\mathcal{S}$, receives a reward $r_{t}$,
and takes an \emph{action} $a_{t}\in\mathcal{A}$. We seek a policy
$\pi$ that maximises the \emph{expected }sum of future rewards, or
\emph{value}. The\emph{ action-value }$Q^{\pi}(s,a)$ of a state-action
pair $(s,a)$ under a policy $\mbox{\ensuremath{\pi}}$ is the expected
discounted sum of future rewards, given that the agent takes action
$a$ from state $s$, and follows $\pi$ thereafter: $Q^{\pi}(s,a)=\mathbb{E_{\pi}}\big[\sum_{k=0}^{\infty}\gamma^{k}r_{t+k+1}\mid s_{t}=s,\ a_{t}=a\big]$. 

RL methods that compute a \emph{value function }are called \emph{value-based}
methods. \emph{Tabular} methods store the value function as a table
having one entry for each state(-action).\emph{ }This representation
of the state space does not have sufficient structure to permit generalisation
based on the \emph{similarity} between states. \emph{Function approximation}
methods achieve generalisation by approximating the value function\emph{
}by a parameterised functional form. In LFA the \emph{approximate
action-value function} $\hat{Q}_{t}^{\pi}(s,a)=\bm{\theta}_{t}^{\top}\bm{\phi}(s,a)$
is a linear combination of state-action features, where $\bm{\phi}:\ \mathcal{S}\times\mathcal{A}\to\mathcal{T}\subseteq\mathbb{R}^{M}$
is an $M$-dimensional feature map and $\bm{\theta}_{t}\in\mathbb{R}^{M}$
is a parameter vector.

\subsection{Count-Based Exploration and Optimism}

Since the true transition and reward distributions $\mathcal{P}$
and $\mathcal{R}$ are unknown to the agent, it must \emph{explore}
the environment to gather more information and reduce its uncertainty.
At the same time, it must \emph{exploit} its current information to
maximise expected cumulative reward. This tradeoff between exploration
and exploitation is a fundamental problem in RL.

Many of the exploration algorithms that enjoy strong theoretical guarantees
implement the `\emph{optimism in the face of uncertainty}' (OFU) heuristic
\cite{Strehl2009}. Most are \emph{tabular }and \emph{count-based}
in that they compute \emph{exploration bonuses} from a table of state(-action)
visit counts. These bonuses are added to the estimated state/action
value. Lower counts entail higher bonuses, so the agent is effectively
optimistic about the value of less frequently visited regions of the
environment. OFU algorithms are more efficient than random strategies
like $\epsilon$-greedy because the agent avoids actions that yield
neither large rewards nor large reductions in uncertainty \cite{Osband2016a}. 

One of the best known is the UCB1 bandit algorithm, which selects
an action $a$ that maximises an \emph{upper confidence} \emph{bound
$\hat{Q}_{t}(a)+\sqrt{\frac{2\log t}{N(a)}}$, }where $\hat{Q}_{t}(a)$
is the estimated mean reward and $N(a)$ is the visit-count \cite{LR:1985bandits}.
The dependence of the bonus term on the inverse square-root of the
visit-count is justified using Chernoff bounds. In the MDP setting,
the tabular OFU algorithm most closely resembling our method is \emph{Model-Based
Interval Estimation with Exploration Bonuses} (MBIE-EB) \cite{Strehl2008}.%
\footnote{To the best of our knowledge, the first work to use exploration bonuses
in the MDP setting was the Dyna-Q+ algorithm, in which the bonus is
a function of the \emph{recency }of visits to a state, rather than
the visit-count \cite{Sutton:1990:IAL:101883.102055}.%
}\emph{ }Empirical estimates $\hat{\mathcal{P}}$ and $\hat{\mathcal{R}}$
of the transition and reward functions are maintained, and $\mathcal{\hat{R}}(s,a)$
is augmented with a bonus term $\frac{\beta}{\sqrt{N(s,a)}}$, where
$N(s,a)$ is the state-action visit-count, and $\beta\in\mathbb{R}$
is a theoretically derived constant. The Bellman optimality equation
for the augmented action-value function is $\tilde{Q}^{\pi}(s,a)=\mathcal{\hat{R}}(s,a)+\frac{\beta}{\sqrt{N(s,a)}}+\ \gamma\sum_{s'}\hat{\mathcal{P}}(s'\mid s,a)\max_{a'\in\mathcal{A}}\tilde{Q}^{\pi}(s',a')$.
Here the dependence of the bonus on the inverse square-root of the
visit-count is provably optimal \cite{Kolter2009}. This equation
can be solved using any MDP solution method.

\subsection{Exploration in Large MDPs\label{sub:BellemarePseudo}}

While tabular OFU algorithms perform well in practice on small MDPs
\cite{1374179}, their\emph{ sample complexity} becomes prohibitive
for larger problems \cite{Kakade:2003}. MBIE-EB, for example, has
a sample complexity bound of $\tilde{O}\big(\frac{\left|\mathcal{S}\right|^{2}\left|\mathcal{A}\right|}{\epsilon^{3}(1-\gamma)^{6}}\big)$.
In the high-dimensional setting -- where the agent cannot hope to
visit every state during training -- this bound offers no guarantee
that the trained agent will perform well. 

Several very recent extensions of count-based exploration methods
have produced impressive results on high-dimensional RL benchmarks.
These algorithms closely resemble MBIE-EB, but they substitute the
state-action visit-count for a generalised count which quantifies
the similarity of a state to previously visited states. Bellemare
\emph{et. al.} construct a Context Tree Switching (CTS) density model
over the state space such that higher probability is assigned to states
that are more similar to visited states \cite{Bellemare2016,veness2012context}.
A state \emph{pseudocount} is then derived from this density. A subsequent
extension of this work replaces the CTS density model with a neural
network \cite{DBLP:journals/corr/OstrovskiBOM17}. Another recent
proposal uses locality sensitive hashing (LSH) to cluster similar
states, and the number of visited states in a cluster serves as a
generalised visit-count \cite{Tang2016exploration}. As in the MBIE-EB
algorithm, these counts are used to compute exploration bonuses. These
three algorithms outperform random strategies, and are currently the
leading exploration methods in large discrete domains where exploration
is hard.

\section{Method}

Here we introduce the \emph{$\phi$-Exploration Bonus }($\phi$-EB)
algorithm, which drives the agent to visit states about which it is
uncertain. Following other optimistic count-based exploration algorithms,
we use a (generalised) state visit-count in order to estimate the
uncertainty associated with a state. A generalised count is a \emph{novelty}
measure that quantifies how dissimilar a state is from those already
visited. Measuring novelty therefore involves choosing a similarity
measure for states. Of course, states can be similar in myriad ways,
but not all of these are relevant to solving the MDP. If the solution
method used is value-based, then states should only be considered
similar if they share the features that are determinative of value.
This motivates us to construct a similarity measure that exploits
the feature representation that is used for value function approximation.
These features are explicitly designed\emph{ }to be relevant for estimating
value. If they were not, they would not permit a good approximation
to the true value function. This sets our method apart from the approaches
described in section \ref{sub:BellemarePseudo}. They measure novelty
with respect to a separate, exploration-specific representation of
the state space, one that bears no relation to the value function
or the reward structure of the MDP. We argue that measuring novelty
in feature space is a simpler and more principled approach, and hypothesise
that more efficient exploration will result.

\subsection{A Visit-Density over Feature Space\label{sub:A-Visit-Density-over}}

Our exploration method is designed for use with LFA, and measures
novelty with respect to a fixed feature representation of the state
space. The challenge is to measure novelty without computing the distance
between each new feature vector and those in the history. That approach
becomes infeasible because the cost of computing these distances grows
with the size of the history.

Our method constructs a \emph{density model over feature space }that
assigns higher probability to states that share more features with
more frequently observed states. Let $\bm{\phi}:\mathcal{S}\to\mathcal{T}\subseteq\mathbb{R}^{M}$
be the feature mapping from the state space into an $M$-dimensional
feature space $\mathcal{T}$. Let $\bm{\phi}_{t}=\bm{\phi}(s_{t})$
denote the state feature vector observed at time $t$. We denote the
sequence of observed feature vectors after $t$ timesteps by $\bm{\phi}_{1:t}\in\mathcal{T}^{t}$,
and denote the set of all finite sequences of feature vectors by $\mathcal{T}^{*}$.
Let $\bm{\phi}_{1:t}\bm{\phi}$ denote the sequence where $\bm{\phi}_{1:t}$
is followed by $\bm{\phi}$. The $i$-th element of $\bm{\phi}$ is
denoted by $\phi_{i}$, and the $i$-th element of $\bm{\phi}_{t}$
is $\phi_{t,i}$. 
\begin{defn}
[Feature Visit-Density]Let $\rho:\mathcal{T}^{*}\times\mathcal{T}\to[0,1]$
be a density model\emph{ }that maps a finite sequence of feature vectors
$\bm{\phi}_{1:t}\in\mathcal{T}^{*}$ to a probability distribution
over $\mathcal{T}$. The \emph{feature visit-density }$\rho_{t}(\bm{\phi})$
at time $t$ is the distribution over $\mathcal{T}$ that is returned
by $\rho$ after observing $\bm{\phi}_{1:t}$. 
\end{defn}
We construct our\emph{ }feature visit-density\emph{ }as a product
of independent factor distributions $\rho_{t}^{i}(\phi_{i})$ over
individual features $\phi_{i}\in\mbox{\ensuremath{\mathcal{U}\subseteq\mathbb{R}}}$:
\[
\rho_{t}(\bm{\phi})=\prod_{i=1}^{M}\rho_{t}^{i}(\phi_{i})
\]
If $\mathcal{U}$ is countable we can use a count-based estimator
for the factor models $\rho_{t}^{i}(\phi_{i})$, such as the empirical
estimator $\rho_{t}^{i}(\phi_{i})=\frac{N_{t}(\phi_{i})}{t}$, where
$N_{t}(\phi_{i})$ is the number of times $\phi_{i}$ has occurred.
In our implementation we use the Krichevsky-Trofimov (KT) estimator
$\rho_{t}^{i}(\phi_{i})=\frac{N_{t}(\phi_{i})+\frac{1}{2}}{t+1}$.

This density model induces a similarity measure on the feature space.
Loosely speaking, feature vectors that share component features are
deemed similar. This enables us to use $\rho_{t}(\bm{\phi})$ as a
novelty measure for states, by comparing the features of newly observed
states to those in the history. If $\bm{\phi}(s)$ has more novel
component features, $\rho_{t}(\bm{\phi})$ will be lower. By modelling
the features as independent, and using count-based estimators as factor
models, our method learns reasonable novelty estimates from very little
data. 
\begin{example*}
Suppose we use a 3-D binary feature map and that after 3 timesteps
the history of observed feature vectors is $\bm{\phi}_{1:3}=(0,1,0),(0,1,0),(0,1,0)$.
Let us estimate the feature visit densities of two \emph{unobserved}
feature vectors $\bm{\phi'}=(1,1,0)$, and $\bm{\phi}''=(1,0,1)$.
Using the KT estimator for the factor models, we have $\rho_{3}(\bm{\phi}')=\rho_{3}^{1}(1)\cdot\rho_{3}^{2}(1)\cdot\rho_{3}^{3}(0)=\frac{1}{8}\cdot\frac{7}{8}\cdot\frac{7}{8}\approx0.1$,
and $\rho_{3}(\bm{\phi}'')=\rho_{3}^{1}(1)\cdot\rho_{3}^{2}(0)\cdot\rho_{3}^{3}(1)=(\frac{1}{8})^{3}\approx0.002$.
Note that $\rho_{3}(\bm{\phi}')>\rho_{3}(\bm{\phi}'')$ because the
component features of $\bm{\phi}'$ are more similar to those in the
history. As desired, our novelty measure generalises across the state
space.
\end{example*}

\subsection{The $\phi$-pseudocount}

Here we adopt a recently proposed method for computing generalised
visit-counts from density models \cite{Bellemare2016,DBLP:journals/corr/OstrovskiBOM17}.
By analogy with these pseudocounts, we derive two \emph{$\phi$-pseudocounts}
from our feature visit-density. 
\begin{defn}
[$\phi$-pseudocount and Naive $\phi$-pseudocount]\label{def:pseudocount}
Let $\rho_{t}(\bm{\phi})$ be the feature visit-density after observing
$\bm{\phi}_{1:t}$. Let $\rho'_{t}(\bm{\phi})$ denote the same density
model after $\bm{\phi}_{1:t}\bm{\phi}$ has been observed.\end{defn}
\begin{itemize}
\item The \emph{naive $\phi$-pseudocount} \emph{$\tilde{N}_{t}^{\phi}(s)$}
for a state $s\in\mathcal{S}$ at time $t$ is
\[
\tilde{N}_{t}^{\phi}(s)=t\cdot\rho_{t}(\bm{\phi}(s))
\]

\item The \emph{$\phi$-pseudocount }$\hat{N}_{t}^{\phi}(s)$\emph{ }for
a state $s\in\mathcal{S}$ at time $t$ is 
\[
\hat{N}_{t}^{\phi}(s)=\frac{\rho_{t}(\bm{\phi}(s))(1-\rho'_{t}(\bm{\phi}(s)))}{\rho'_{t}(\bm{\phi}(s))-\rho{}_{t}(\bm{\phi}(s))}
\]

\end{itemize}
Empirically, $\hat{N}_{t}^{\phi}(s)$ is usually larger than \emph{$\tilde{N}_{t}^{\phi}(s)$}
and leads to better performance.%
\footnote{The expression for $\hat{N}_{t}^{\phi}(s)$ is derived by letting
it depend on an implicit \emph{total pseudocount }$\hat{n}$ that
can be much larger than $t$, and assuming $\rho_{t}(\bm{\phi})=\frac{\hat{N}_{t}^{\phi}(s)}{\hat{n}\cdot}$,
and $\rho'_{t}(\bm{\phi})=\frac{\hat{N}_{t}^{\phi}(s)+1}{\hat{n}+1}$
\cite{Bellemare2016}.%
}

\subsection{Reinforcement Learning with $\phi$-EB}

\begin{algorithm}
\begin{algorithmic}
\Require{$\beta$, $t_{\text{end}}$}
\While{$t<t_{\text{end}}$}
\State{Observe $\bm{\phi}(s)$, $r_t$}
\State{Compute $\rho_t(\bm{\phi})=\prod_{i}^{M}{\rho^{i}_{t}(\phi_i)}$}
\For{i in \{1,\ldots,M\}}
\State{Update $\rho_{t+1}^i$ with observed $\phi_i$}
\EndFor
\State{Compute $\rho_{t+1}(\bm{\phi})=\prod_{i}^{M}{\rho^{i}_{t+1}(\phi_i)}$}
\State{Compute $\hat{N}_t^{\phi}(s)=\frac{\rho_{t}(\bm{\phi})(1-\rho_{t+1}(\bm{\phi}))}{\rho_{t+1}(\bm{\phi})-\rho_{t}(\bm{\phi})}$}
\State{Compute $\mathcal{R}_t^{\phi}(s,a)=\frac{\beta}{\sqrt{\hat{N}_t^{\phi}(s)}}$}
\State{Set $r_t^{+}=r_t+\mathcal{R}_t^{\phi}(s,a)$}
\State{Pass $\bm{\phi}(s)$, $r_t^{+}$ to RL algorithm to update $\bm{\theta}_t$}
\EndWhile
\State{}
\Return{$\bm{\theta}_{t_{\text{end}}}$}
\end{algorithmic}

\caption{Reinforcement Learning with LFA and $\phi$-EB.\label{alg:Reinforcement-learning-with}}
\end{algorithm}
\label{alg:RLPHIEB}Following traditional count-based exploration
algorithms, we drive optimistic exploration by computing a bonus from
the $\phi$-pseudocount.
\begin{defn}
[$\phi$-Exploration Bonus] Let $\beta\in\mathbb{R}$ be a free parameter.
The \emph{$\phi$-exploration bonus }for a state-action pair $(s,a)\in\mbox{\ensuremath{\mathcal{S}}}\times\mathcal{A}$
at time $t$ is
\[
\mathcal{R}_{t}^{\phi}(s,a)=\frac{\beta}{\sqrt{\hat{N}_{t}^{\phi}(s)}}
\]

\end{defn}
As in the MBIE-EB algorithm, this bonus is added to the reward $r_{t}$.
The agent is trained on the augmented reward $r_{t}^{+}=r_{t}+R_{t}^{\phi}(s,a)$
using any value-based RL algorithm with LFA. At each timestep our
algorithm performs updates for at most $M$ estimators, one for each
feature. The cost of our method is therefore independent of the size
of the state-action space, and scales only in the number of features.
If the feature vectors are sparse, we can maintain a single prototype
estimator for all the features that have not yet been observed. Under
these conditions our method scales only in the number of \emph{observed
}features.

\section{Theoretical Results}

\noindent Here we formalise the comments made in section \ref{sub:A-Visit-Density-over}
by proving a bound that relates our pseudocount to an appropriate
similarity measure. To simplify the analysis, we prove results for
the naive $\phi$-exploration bonus $\tilde{N}_{t}^{\phi}(s)$, though
we expect analogous results to hold for $\hat{N}_{t}^{\phi}(s)$ as
well. We use the empirical estimator for the factor models in the
visit-density. Since the feature set we use in our implementation
is binary, our analysis assumes $\bm{\phi}\in\{0,1\}^{M}$. We begin
by defining a similarity measure for binary feature vectors, and prove
two lemmas.
\begin{defn}[Hamming Similarity for Binary Vectors]
\noindent \label{df:l1similarity} Let $\bm{\phi},\bm{\phi'}\in\{0,1\}^{M}$
be $M$-length binary vectors. The Hamming similarity between $\bm{\phi}$
and $\bm{\phi'}$ is $\mbox{Sim}(\bm{\phi},\bm{\phi'})=1-\frac{1}{M}\left\Vert \bm{\phi}\bm{-\phi'}\right\Vert _{1}$.
\end{defn}
\noindent Note that $\mbox{Sim}(\bm{\phi},\bm{\phi'})\in[0,1]$ for
all $\bm{\phi},\bm{\phi'}\in\{0,1\}^{M}$. The Hamming similarity
is large if $\bm{\phi}$ and $\bm{\phi'}$ share features (i.e. if
the $l_{1}$-distance between them is small). We now prove a lemma
relating the joint probability of a feature vector to the sum of the
probabilities of its factors.
\begin{lem}[AM-GM Inequality and Factorised $\rho$]
\noindent \label{lm:AMGM} Let $\bm{\phi}\in\{0,1\}^{M}$, and let
$\rho_{t}(\mbox{\ensuremath{\bm{\phi}}})=\prod_{i=1}^{M}\rho_{t}^{i}(\phi_{i})$.
Then $\sqrt{\rho(\bm{\phi})}\leq\frac{1}{M}\sum_{i=1}^{M}\rho_{t}^{i}(\phi_{i})$.\end{lem}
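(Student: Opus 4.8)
The plan is to obtain the bound by chaining two elementary facts: the classical AM--GM inequality applied to the $M$ factor probabilities, and a monotonicity observation about powers of numbers lying in $[0,1]$. No machinery beyond this is needed, since the lemma is essentially a statement about the geometric versus arithmetic mean of the $M$ numbers $\rho_t^i(\phi_i)$.

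First I would record that each factor $\rho_t^i(\phi_i)$ is a probability assigned by the factor model $\rho_t^i$, hence lies in $[0,1]$; consequently the product $\rho_t(\bm{\phi})=\prod_{i=1}^M\rho_t^i(\phi_i)$ also lies in $[0,1]$. The AM--GM inequality applied to the nonnegative numbers $\rho_t^1(\phi_1),\dots,\rho_t^M(\phi_M)$ then gives
\[
\Big(\textstyle\prod_{i=1}^M\rho_t^i(\phi_i)\Big)^{1/M}\;\le\;\frac{1}{M}\sum_{i=1}^M\rho_t^i(\phi_i).
\]
It remains only to replace the exponent $1/M$ on the left-hand side by $1/2$. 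For any $x\in(0,1]$ the map $p\mapsto x^{p}$ is nonincreasing in $p$, so since $\tfrac12\ge\tfrac1M$ (using $M\ge 2$, which holds for any nontrivial feature map) we have $x^{1/2}\le x^{1/M}$; the case $x=0$ is immediate. Taking $x=\rho_t(\bm{\phi})\in[0,1]$ yields $\sqrt{\rho_t(\bm{\phi})}\le\rho_t(\bm{\phi})^{1/M}$, and combining this with the displayed inequality gives the claim.

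The step I expect to require the most care is precisely the passage from the $M$-th root to the square root: this is where one genuinely uses $\rho_t(\bm{\phi})\le 1$ — which in turn relies on each factor being a bona fide normalised probability rather than merely a nonnegative weight — together with $M\ge 2$. For $M=1$ the inequality in fact reverses, so I would state the $[0,1]$ membership of the factors explicitly at the outset and flag the harmless assumption $M\ge 2$. Everything else is a direct invocation of AM--GM and needs no further comment.
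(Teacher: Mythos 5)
Your proof is correct and is essentially the same argument as the paper's: the paper chains $\sqrt{\rho}\leq\sqrt[M]{\rho}$ (valid since $\rho\in[0,1]$ and $M\geq2$) with AM--GM, while you apply the same two facts in the opposite order. Your version is slightly more careful, since you explicitly flag the $[0,1]$ membership and the $M\geq2$ requirement (with the correct observation that the inequality reverses for $M=1$), both of which the paper leaves implicit.
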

\begin{proof}
By the inequality of arithmetic and geometric means $\sqrt{\rho(\bm{\phi})}=\sqrt{\prod_{i=1}^{M}\rho_{t}^{i}(\phi_{i})}\leq\sqrt[M]{\prod_{i=1}^{M}\rho_{t}^{i}(\phi_{i})}\leq\frac{1}{M}\sum_{i=1}^{M}\rho_{t}^{i}(\phi_{i})$ 
\end{proof}
\noindent The following lemma relates the probability of an individual
feature to its $l_{1}$-distance from previously observed values.
\begin{lem}[Feature Visit-Density and $l_{1}$-distance]
\label{lm:comp_dist} Let \textup{$\rho_{t}^{i}(\phi_{i})=\frac{1}{t}N_{t}(\phi_{i})$.}
Then for all $\phi_{i},\phi_{k,i}\in\{0,1\}$, \textup{$\rho_{t}^{i}(\phi_{i})=\frac{1}{t}\sum_{k=1}^{t}1-\left|\phi_{i}-\phi_{k,i}\right|$.}\end{lem}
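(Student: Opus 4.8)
The plan is to reduce the claimed identity to a single elementary observation: for binary arguments, the quantity $1-\left|\phi_{i}-\phi_{k,i}\right|$ is precisely the indicator of the event $\phi_{k,i}=\phi_{i}$. First I would note that since $\phi_{i},\phi_{k,i}\in\{0,1\}$, the difference $\phi_{i}-\phi_{k,i}$ lies in $\{-1,0,1\}$, so $\left|\phi_{i}-\phi_{k,i}\right|\in\{0,1\}$, with $\left|\phi_{i}-\phi_{k,i}\right|=0$ iff $\phi_{i}=\phi_{k,i}$ and $\left|\phi_{i}-\phi_{k,i}\right|=1$ otherwise. Hence $1-\left|\phi_{i}-\phi_{k,i}\right|=\mathbb{1}[\phi_{k,i}=\phi_{i}]$ for every $k\le t$.

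Next I would expand the definition of the factor count. By definition $N_{t}(\phi_{i})$ counts the timesteps $k\le t$ at which the $i$-th feature took the value $\phi_{i}$, i.e. $N_{t}(\phi_{i})=\sum_{k=1}^{t}\mathbb{1}[\phi_{k,i}=\phi_{i}]$. Substituting the identity from the first step gives $N_{t}(\phi_{i})=\sum_{k=1}^{t}\bigl(1-\left|\phi_{i}-\phi_{k,i}\right|\bigr)$, and dividing through by $t$ yields $\rho_{t}^{i}(\phi_{i})=\tfrac{1}{t}N_{t}(\phi_{i})=\tfrac{1}{t}\sum_{k=1}^{t}\bigl(1-\left|\phi_{i}-\phi_{k,i}\right|\bigr)$, which is the claim.

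There is essentially no obstacle here — the proof is a one-line case analysis. The only points that warrant care are that the binary restriction $\bm{\phi}\in\{0,1\}^{M}$ is exactly what makes $1-\left|\,\cdot\,\right|$ a genuine $\{0,1\}$-valued indicator (the identity would fail for real-valued features), and that the summation index ranges over the $t$ observed vectors recorded in $\bm{\phi}_{1:t}$, matching the normalisation in $\rho_{t}^{i}(\phi_{i})=\tfrac{1}{t}N_{t}(\phi_{i})$. I would expect this lemma to be combined with Lemma~\ref{lm:AMGM}: plugging the per-feature expression derived here into $\sqrt{\rho_{t}(\bm{\phi})}\le\tfrac{1}{M}\sum_{i=1}^{M}\rho_{t}^{i}(\phi_{i})$ and rearranging the double sum should express the bound in terms of the average Hamming similarity $\tfrac{1}{t}\sum_{k=1}^{t}\mathrm{Sim}(\bm{\phi},\bm{\phi}_{k})$ between $\bm{\phi}$ and the observed feature vectors, which is presumably the main theorem these two lemmas are assembled to prove.
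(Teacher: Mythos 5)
Your proof is correct and is essentially the paper's argument: the paper verifies the identity by a case split on $\phi_i\in\{0,1\}$ (writing $\rho_{t}^{i}(0)=1-\rho_{t}^{i}(1)=1-\frac{1}{t}\sum_{k=1}^{t}\phi_{k,i}$ and noting the $\phi_i=1$ case is symmetric), whereas you observe once and for all that $1-\left|\phi_{i}-\phi_{k,i}\right|$ is the indicator of the event $\phi_{k,i}=\phi_{i}$ and that $N_{t}(\phi_{i})$ counts exactly these events, which dispatches both cases uniformly. Your closing remark about combining this with Lemma \ref{lm:AMGM} and rearranging the double sum into an average Hamming similarity is precisely how the paper proves Theorem \ref{th:rhomaxsim}.
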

\begin{proof}
\noindent Suppose $\phi_{i}=0$:
\begin{alignat*}{1}
\rho_{t}^{i}(0) & =1-\rho_{t}^{i}(1)=1-\frac{1}{t}\sum_{k=1}^{t}\phi_{k,i}\\
 & =\frac{1}{t}\sum_{k=1}^{t}1-\left|0-\phi_{k,i}\right|=\frac{1}{t}\sum_{k=1}^{t}1-\left|\phi_{i}-\phi_{k,i}\right|
\end{alignat*}
The $\phi_{i}=1$ case follows by an almost identical argument. 
\end{proof}
\noindent The following theorem and its corollary are the major results
of this section. These connect the Hamming similarity (to previously
observed feature vectors) with both the feature visit-density and
the $\phi$-pseudocount. We show that a state which shares few features
with those in the history will be assigned low probability by our
density model, and will therefore have a low $\phi$-pseudocount.
\begin{thm}
[Feature Visit-Density and Average Similarity]\label{th:rhomaxsim}

\noindent Let $s\in\mathcal{S}$ be a state with binary feature representation
$\bm{\phi}=\bm{\phi}(s)\in\{0,1\}^{M}$, and let $\rho_{t}(\bm{\phi})=\prod_{i=1}^{M}\rho_{t}^{i}(\phi_{i})$
be its feature visit-density at time $t$. Then 
\[
\rho_{t}(\bm{\phi})\leq\frac{1}{t}\sum_{k=1}^{t}\mbox{\emph{Sim}}(\bm{\phi},\bm{\phi}_{k})
\]
\end{thm}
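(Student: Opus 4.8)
The plan is to derive the bound by composing the two lemmas just established, reading them from the geometric-mean side down to the Hamming similarity.

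First I would note that $\rho_t(\bm{\phi})=\prod_{i=1}^{M}\rho_t^i(\phi_i)$ is a product of factor probabilities, each lying in $[0,1]$, so $\rho_t(\bm{\phi})\in[0,1]$ and therefore $\rho_t(\bm{\phi})\le\sqrt{\rho_t(\bm{\phi})}$. Lemma~\ref{lm:AMGM} then gives $\rho_t(\bm{\phi})\le\sqrt{\rho_t(\bm{\phi})}\le\frac{1}{M}\sum_{i=1}^{M}\rho_t^i(\phi_i)$, converting the product into an arithmetic average of the per-feature densities.

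Second, I would substitute the identity from Lemma~\ref{lm:comp_dist}, namely $\rho_t^i(\phi_i)=\frac{1}{t}\sum_{k=1}^{t}\big(1-|\phi_i-\phi_{k,i}|\big)$, into that average, and interchange the two finite summations (over features $i$ and over timesteps $k$), which is legitimate since both index sets are finite. This yields $\rho_t(\bm{\phi})\le\frac{1}{t}\sum_{k=1}^{t}\frac{1}{M}\sum_{i=1}^{M}\big(1-|\phi_i-\phi_{k,i}|\big)$. Finally, for each fixed $k$ I would recognise the inner expression as $\frac{1}{M}\sum_{i=1}^{M}\big(1-|\phi_i-\phi_{k,i}|\big)=1-\frac{1}{M}\|\bm{\phi}-\bm{\phi}_k\|_1=\mbox{Sim}(\bm{\phi},\bm{\phi}_k)$, invoking Definition~\ref{df:l1similarity} and the fact that for binary vectors the $i$-th summand of $\|\bm{\phi}-\bm{\phi}_k\|_1$ is exactly $|\phi_i-\phi_{k,i}|$. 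Summing over $k$ delivers the claimed inequality.

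There is no substantive obstacle here: the real content already lives in Lemmas~\ref{lm:AMGM} and~\ref{lm:comp_dist}, and what remains is essentially bookkeeping — one interchange of finite sums and one recognition of the Hamming similarity. The single point worth stating carefully is the opening move: the theorem bounds $\rho_t(\bm{\phi})$ itself, not $\sqrt{\rho_t(\bm{\phi})}$, so one must explicitly use $\rho_t(\bm{\phi})\le 1$ to bridge to Lemma~\ref{lm:AMGM}. I would flag that this is exactly where the estimate is lossy, since for small $\rho_t(\bm{\phi})$ the gap between $\rho_t(\bm{\phi})$ and $\sqrt{\rho_t(\bm{\phi})}$ is large; the bound should therefore be read as qualitative rather than tight.
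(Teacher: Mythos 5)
Your proposal is correct and follows essentially the same chain as the paper's own proof: bound $\rho_{t}(\bm{\phi})$ by $\sqrt{\rho_{t}(\bm{\phi})}$, apply Lemma~\ref{lm:AMGM}, substitute Lemma~\ref{lm:comp_dist}, swap the finite sums, and recognise the Hamming similarity via Definition~\ref{df:l1similarity}. Your added remark that the step $\rho_{t}(\bm{\phi})\leq\sqrt{\rho_{t}(\bm{\phi})}$ is where the bound becomes loose is a fair observation but does not change the argument.
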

\begin{proof}
\begin{eqnarray*}
\rho_{t}(\bm{\phi}) & \leq & \sqrt{\rho_{t}(\bm{\phi})}\\
 & \stackrel{(a)}{\leq} & \frac{1}{M}\sum_{i=1}^{M}\rho_{t}^{i}(\phi_{i})\\
 & \stackrel{(b)}{=} & \frac{1}{M}\sum_{i=1}^{M}\frac{1}{t}\sum_{k=1}^{t}\big(1-\left|\phi_{i}-\phi_{k,i}\right|\big)\\
 & = & \frac{1}{t}\sum_{k=1}^{t}\big(1-\frac{1}{M}\sum_{i=1}^{M}\left|\phi_{i}-\phi_{k,i}\right|\big)\\
 & = & \frac{1}{t}\sum_{k=1}^{t}\big(1-\frac{1}{M}\left\Vert \bm{\phi}-\bm{\phi}_{k}\right\Vert _{1}\big)\\
 & \stackrel{(c)}{=} & \frac{1}{t}\sum_{k=1}^{t}\mbox{Sim}(\bm{\phi},\bm{\phi}_{k})
\end{eqnarray*}
where (a) follows from Lemma \ref{lm:AMGM}, (b) from Lemma \ref{lm:comp_dist},
and (c) from Definition \ref{df:l1similarity}. 
\end{proof}
\noindent We immediately get a similar bound for the naive $\phi$-pseudocount
$\tilde{N}_{t}^{\phi}(s)$.
\begin{cor}
[$\phi$-pseudocount and Total Similarity]\label{cor:avgsim}\emph{
\begin{eqnarray*}
\tilde{N}_{t}^{\phi}(s) & \leq & \sum_{k=1}^{t}\mbox{Sim}(\bm{\phi},\bm{\phi}_{k})
\end{eqnarray*}
}\end{cor}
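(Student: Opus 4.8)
The plan is to obtain this bound as an immediate consequence of Theorem~\ref{th:rhomaxsim} together with the definition of the naive $\phi$-pseudocount. Recall that Definition~\ref{def:pseudocount} sets $\tilde{N}_{t}^{\phi}(s) = t\cdot\rho_{t}(\bm{\phi}(s))$, so the statement is just the inequality of Theorem~\ref{th:rhomaxsim} rescaled by the factor $t$.

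Concretely, I would first invoke Theorem~\ref{th:rhomaxsim} to write $\rho_{t}(\bm{\phi}) \leq \frac{1}{t}\sum_{k=1}^{t}\mbox{Sim}(\bm{\phi},\bm{\phi}_{k})$, where $\bm{\phi}=\bm{\phi}(s)$. Then I would multiply both sides by $t$ (which is positive, so the direction of the inequality is preserved), and substitute the definition $\tilde{N}_{t}^{\phi}(s) = t\cdot\rho_{t}(\bm{\phi})$ on the left-hand side. This yields $\tilde{N}_{t}^{\phi}(s) \leq \sum_{k=1}^{t}\mbox{Sim}(\bm{\phi},\bm{\phi}_{k})$ directly, with no further work required.

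There is no real obstacle here: the corollary is a one-line rescaling of the theorem, and all the substantive content (the AM--GM step of Lemma~\ref{lm:AMGM} and the $l_{1}$-identity of Lemma~\ref{lm:comp_dist}, combined in Theorem~\ref{th:rhomaxsim}) has already been established. The only thing worth stating carefully in the write-up is that $t>0$ so that multiplication preserves the inequality, and that the left-hand side is literally the definition of $\tilde{N}_{t}^{\phi}(s)$; beyond that the proof is immediate.
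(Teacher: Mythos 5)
Your proposal is correct and is exactly the paper's argument: the authors also obtain the corollary immediately from Theorem~\ref{th:rhomaxsim} and the definition $\tilde{N}_{t}^{\phi}(s)=t\cdot\rho_{t}(\bm{\phi}(s))$ by rescaling with the factor $t$. Nothing further is needed.
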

\begin{proof}
Immediate from Theorem \ref{th:rhomaxsim} and Definition \ref{def:pseudocount}. 
\end{proof}
\noindent $\tilde{N}_{t}^{\phi}(s)$ therefore captures an intuitive
relation between novelty and similarity to visited states. By visiting
a state that minimises the $\phi$-pseudocount, an agent also minimises
a lower bound on its Hamming similarity to previously visited states.
As desired, we have a novelty measure that is closely related to the
distances between states in feature space, but which obviates the
cost of computing those distances directly.

\section{Empirical Evaluation}

Our evaluation is designed to answer the following research questions:
\begin{itemize}
\item Is a novelty measure derived from the features used for LFA a good
way to generalise state visit-counts?
\item Does $\phi$-EB produce improvement across a range of environments,
or only if rewards are sparse?
\item Can $\phi$-EB with LFA compete with the state-of-the-art in exploration
and deep RL?
\end{itemize}

\subsection{Setup}

We evaluate our algorithm on five games from the Arcade Learning Environment
(ALE), which has recently become a standard high-dimensional benchmark
for RL \cite{BNVB:2013ale}. The reward signal is computed from the
game score. The raw state is a frame of video (a 160$\times$210 array
of 7-bit pixels). There are 18 available actions. The ALE is a particularly
interesting testbed in our context, because the difficulty of exploration
varies greatly between games. Random strategies often work well, and
it is in these games that Deep Q-Networks (DQN) with $\epsilon$-greedy
is able to achieve so-called human-level performance \cite{Mnih2015}.
In others, however, DQN with $\epsilon$-greedy does not improve upon
a random policy, and its inability to explore efficiently is one of
the key determinants of this failure \cite{DBLP:journals/corr/OsbandBPR16}.
We chose five of these games where exploration is hard. Three of the
chosen games have sparse rewards (Montezuma's Revenge, Venture, Freeway)
and two have dense rewards (Frostbite, Q{*}bert).%
\footnote{Note that our experimental evaluation uses the stochastic version
of the ALE \cite{BNVB:2013ale}.%
} 

Evaluating agents in the ALE is computationally demanding. We chose
to focus more resources on Montezuma's Revenge and Venture, for two
reasons: (1) we hypothesise that $\phi$-EB will produce more improvement
in sparse reward games, and (2) leading algorithms with which we seek
to compare $\phi$-EB have also focused on these games. We conducted
five independent learning trials for Montezuma and Venture, and two
trials for the remaining three games. All agents were trained for
100 million frames on the no-op metric \cite{BNVB:2013ale}. Trained
agents were then evaluated for 500 episodes; Table \ref{table} reports
the average evaluation score.

We implement Algorithm \ref{alg:Reinforcement-learning-with} using
Sarsa($\lambda$) with replacing traces and LFA as our RL method,
because it is less likely to diverge than $Q$-learning \cite{Sutton1998}.
To implement LFA in the ALE we use the Blob-PROST feature set presented
in \cite{LMTB:2015shallow}. To date this is the best performing feature
set for LFA in the ALE. The parameters for the Sarsa($\lambda)$ algorithm
are set to the same values as in \cite{LMTB:2015shallow}. Hereafter
we refer to our algorithm as Sarsa-$\phi$-EB. To conduct a controlled
investigation of the effectiveness of $\phi$-EB, we also evaluate
a baseline implementation of Sarsa($\lambda$) with the same features
but with $\epsilon$-greedy exploration (which we denote Sarsa-$\epsilon$).
The same training and evaluation regime is used for both; learning
curves are reported in Figure \ref{fig:Average-training-scores}. 

The $\beta$ coefficient in the $\phi$-exploration bonus was set
to 0.05 for \emph{all games}, after a coarse parameter search. This
search was performed once, across a range of ALE games, and a value
was chosen for which the agent achieved good scores in most games.

\subsection{Results}

\subsubsection*{Comparison with $\epsilon$-greedy Baseline}

\begin{figure*}
\noindent \begin{centering}
\includegraphics[scale=0.65]{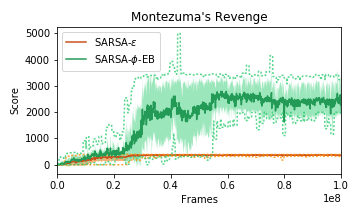}\includegraphics[scale=0.65]{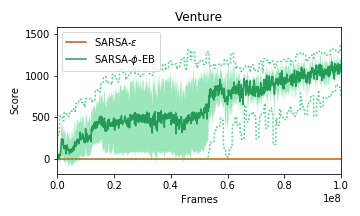}
\par\end{centering}

\noindent \begin{centering}
\includegraphics[scale=0.46]{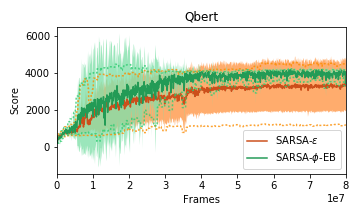}\includegraphics[scale=0.46]{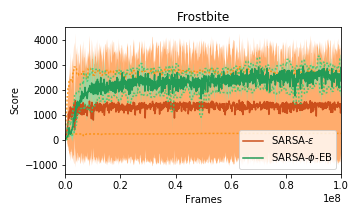}\includegraphics[scale=0.46]{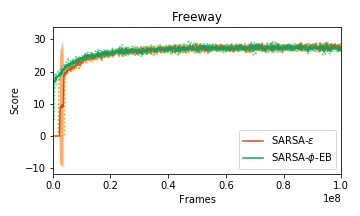}
\par\end{centering}

\caption{Average training scores for Sarsa-$\phi$-EB and the baseline Sarsa-$\epsilon$.
Dashed lines are min/max scores. Shaded regions describe one standard
deviation.\label{fig:Average-training-scores}}
\end{figure*}
\begin{table*}[t]
\begin{centering}
\begin{tabular}{cccccc}
 & \textbf{Venture} & \textbf{Montezuma's Revenge} & \textbf{Freeway} & \textbf{Frostbite} & \textbf{Q{*}bert}\tabularnewline\addlinespace
\midrule
\midrule 
\textbf{\textit{\textcolor{black}{Sarsa-$\phi$-EB}}} & \textit{\textcolor{black}{1169.2}} & \textit{\textcolor{black}{2745.4}} & \textit{\textcolor{black}{0.0}} & \textit{\textcolor{black}{2770.1}} & \textit{\textcolor{black}{4111.8}}\tabularnewline
\midrule 
\textbf{\textcolor{black}{Sarsa-$\epsilon$}} & \textcolor{black}{0.0} & \textcolor{black}{399.5} & \textcolor{black}{29.9} & \textcolor{black}{1394.3} & \textcolor{black}{3895.3}\tabularnewline
\midrule
\midrule 
\textbf{DDQN-PC} & N/A & \textbf{3459} & N/A & N/A & N/A\tabularnewline
\midrule 
\textbf{A3C+} & 0 & 142 & 27 & 507 & 15805\tabularnewline
\midrule 
\textbf{TRPO-Hash} & 445 & 75 & \textbf{34} & \textbf{5214} & N/A\tabularnewline
\midrule 
\textbf{MP-EB} & N/A & 0 & 12 & 380 & N/A\tabularnewline
\midrule
\midrule 
\textbf{DDQN} & 98 & 0 & 33 & 1683 & 15088\tabularnewline
\midrule 
\textbf{DQN-PA} & 1172 & 0 & 33 & 3469 & 5237\tabularnewline
\midrule 
\textbf{Gorila} & \textbf{1245} & 4 & 12 & 605 & 10816\tabularnewline
\midrule 
\textbf{TRPO } & 121 & 0 & 16 & 2869 & 7733\tabularnewline
\midrule 
\textbf{Dueling} & 497 & 0 & 0 & 4672 & \textbf{19220}\tabularnewline
\bottomrule
\end{tabular}
\par\end{centering}

\caption{Average evaluation score for leading algorithms. Sarsa-$\phi$-EB
and Sarsa-$\epsilon$ were evaluated after 100M training frames on
all games except Q{*}bert, for which they trained for 80M frames.
DDQN-PC scores reflect evaluation after 100M training frames. The
MP-EB agent was only trained for 20M frames. All other algorithms
were evaluated after 200M frames.\label{table} Leading scores are
highlighted in bold.}
\end{table*}
 In Montezuma's Revenge, Sarsa-$\epsilon$ rarely leaves the first
room. Its policy converges after an average of 20 million frames.
Sarsa-$\phi$-EB continues to improve throughout training, visiting
up to 14 rooms. The largest improvement over the baseline occurs in
Venture. Sarsa-$\epsilon$ fails to score, while Sarsa-$\phi$-EB
continues to improve throughout training. In Q{*}bert and Frostbite,
the difference is less dramatic. These games have dense, well-shaped
rewards that guide the agent's path through state space and elide
$\epsilon$-greedy's inefficiency. Nonetheless, Sarsa-$\phi$-EB consistently
outperforms Sarsa-$\epsilon$ throughout training so its cumulative
reward is much higher. 

In Freeway, Sarsa-$\phi$-EB with $\beta=0.05$ fails to match the
performance of the baseline algorithm, but with $\beta=0.035$ it
performs better (Figure \ref{fig:Average-training-scores} shows the
learning curve for the latter). This sensitivity to the $\beta$ parameter
likely results from the large number of unique Blob-PROST features
that are active in Freeway, many of which are not relevant for finding
the optimal policy. If $\beta$ is too high the agent is content to
stand still and receive exploration bonuses for observing new configurations
of traffic. This accords with our hypothesis that efficient optimistic
exploration should involve measuring novelty with respect to task-relevant
features.

In summary, Sarsa-$\phi$-EB with $\beta=0.05$ outperforms Sarsa-$\epsilon$
on all tested games except Freeway. Since both use the same feature
set and RL algorithm, and differ only in their exploration policies,
this is strong evidence that $\phi$-EB produces improvement over
random exploration across a range of environments. This also supports
our conjecture that using the same features for value function approximation
and novelty estimation is an appropriate way to generalise visit-counts
to the high-dimensional setting.

\subsubsection*{Comparison with Leading Algorithms}

Table \ref{table} compares our evaluation scores to Double DQN (DDQN)
\cite{HGS:2016doubleQ}, Double DQN with pseudocount (DDQN-PC) \cite{Bellemare2016},
A3C+ \cite{Bellemare2016}, DQN Pop-Art (DQN-PA) \cite{DBLP:journals/corr/HasseltGHS16},
Dueling Network (Dueling) \cite{WFL:2015DQN}, Gorila \cite{DBLP:journals/corr/NairSBAFMPSBPLM15},
DQN with Model Prediction Exploration Bonuses (MP-EB) \cite{DBLP:journals/corr/StadieLA15},
Trust Region Policy Optimisation (TRPO) \cite{DBLP:journals/corr/SchulmanLMJA15},
and TRPO-AE-SimHash (TRPO-Hash) \cite{Tang2016exploration}. The most
interesting comparisons for our purposes are with TRPO-Hash, DDQN-PC,
A3C+, and MP-EB, because these algorithms all use exploration strategies
that drive the agent to reduce its uncertainty. TRPO-Hash, DDQN-PC,
and A3C+ are count-based methods, MP-EB seeks high model prediction
error.

Our Sarsa-$\phi$-EB algorithm achieves an average score of 2745.4
on Montezuma: the second highest reported score. On this game it far
outperforms every algorithm apart from DDQN-PC, despite only having
trained for half the number of frames. Note that neither A3C+ nor
TRPO-Hash achieves more than 200 points, despite their exploration
strategies. 

On Venture Sarsa-$\phi$-EB also achieves state-of-the-art performance.
It achieves the third highest reported score despite its short training
regime, and far outperforms A3C+ and TRPO-Hash. DDQN-PC evaluation
scores are not given for Venture, but reported learning curves suggest
Sarsa-$\phi$-EB performs much better here \cite{Bellemare2016}.
The performance of Sarsa-$\phi$-EB in Frostbite also seems competitive
given the shorter training regime. Nonlinear algorithms perform better
in Q{*}bert. In Freeway Sarsa-$\phi$-EB fails to score any points,
for reasons already discussed.

\section{Conclusion}

We have introduced the $\phi$-Exploration Bonus method, a count-based
optimistic exploration strategy that scales to high-dimensional environments.
It is simpler to implement and less computationally demanding than
some other proposals. Our evaluation shows that it improves upon $\epsilon$-greedy
exploration on a variety of games, and that it is even competitive
with leading exploration techniques developed for deep RL. Unlike
other methods, it does not require the design of an exploration-specific
state representation, but rather exploits the features used in the
approximate value function. We have argued that computing novelty
with respect to these task-relevant features is an efficient and principled
way to generalise visit-counts for exploration. We conclude by noting
that this reliance on the feature representation used for LFA is also
a limitation. It is not obvious how a method like ours could be combined
with the nonlinear function approximation techniques that have driven
recent progress in RL. We hope the success of our simple method will
inspire future work in this direction.

\bibliographystyle{named}
\bibliography{bibliographyNEWNoURL}

\begin{thebibliography}{}

\bibitem[\protect\citeauthoryear{Bellemare \bgroup \em et al.\egroup
  }{2013}]{BNVB:2013ale}
Marc~G Bellemare, Yavar Naddaf, Joel Veness, and Michael Bowling.
\newblock The arcade learning environment: An evaluation platform for general
  agents.
\newblock {\em Journal of Artificial Intelligence Research}, 47:253--279, 2013.

\bibitem[\protect\citeauthoryear{Bellemare \bgroup \em et al.\egroup
  }{2016}]{Bellemare2016}
Marc~G. Bellemare, Sriram Srinivasan, Georg Ostrovski, Tom Schaul, David
  Saxton, and R{\'{e}}mi Munos.
\newblock Unifying count-based exploration and intrinsic motivation.
\newblock {\em CoRR}, abs/1606.01868, 2016.

\bibitem[\protect\citeauthoryear{Kakade}{2003}]{Kakade:2003}
Sham~Machandranath Kakade.
\newblock {\em On the Sample Complexity of Reinforcement Learning}.
\newblock PhD thesis, University College London, 2003.

\bibitem[\protect\citeauthoryear{Kolter and Ng}{2009}]{Kolter2009}
J~Zico Kolter and Andrew~Y Ng.
\newblock {Near-Bayesian exploration in polynomial time}.
\newblock {\em International Conference on Machine Learning}, pages 513--520,
  2009.

\bibitem[\protect\citeauthoryear{Lai and Robbins}{1985}]{LR:1985bandits}
Tze~Leung Lai and Herbert Robbins.
\newblock Asymptotically efficient adaptive allocation rules.
\newblock {\em Advances in Applied Mathematics}, 6(1):4--22, 1985.

\bibitem[\protect\citeauthoryear{Liang \bgroup \em et al.\egroup
  }{2016}]{LMTB:2015shallow}
Yitao Liang, Marlos~C Machado, Erik Talvitie, and Michael Bowling.
\newblock State of the art control of {A}tari games using shallow reinforcement
  learning.
\newblock In {\em Autonomous Agents and Multi-Agent Systems}, 2016.

\bibitem[\protect\citeauthoryear{Mnih \bgroup \em et al.\egroup
  }{2015}]{Mnih2015}
Volodymyr Mnih, Koray Kavukcuoglu, David Silver, Andrei~a Rusu, Joel Veness,
  Marc~G Bellemare, Alex Graves, Martin Riedmiller, Andreas~K Fidjeland, Georg
  Ostrovski, Stig Petersen, Charles Beattie, Amir Sadik, Ioannis Antonoglou,
  Helen King, Dharshan Kumaran, Daan Wierstra, Shane Legg, and Demis Hassabis.
\newblock {Human-level control through deep reinforcement learning}.
\newblock {\em Nature}, 518(7540):529--533, 2015.

\bibitem[\protect\citeauthoryear{Mnih \bgroup \em et al.\egroup
  }{2016}]{MBMG+2016DQN}
Volodymyr Mnih, Adri{\`{a}}~Puigdom{\`{e}}nech Badia, Mehdi Mirza, Alex Graves,
  Timothy~P Lillicrap, Tim Harley, David Silver, and Koray Kavukcuoglu.
\newblock Asynchronous methods for deep reinforcement learning.
\newblock In {\em International Conference on Machine Learning}, 2016.

\bibitem[\protect\citeauthoryear{Nair \bgroup \em et al.\egroup
  }{2015}]{DBLP:journals/corr/NairSBAFMPSBPLM15}
Arun Nair, Praveen Srinivasan, Sam Blackwell, Cagdas Alcicek, Rory Fearon,
  Alessandro~De Maria, Vedavyas Panneershelvam, Mustafa Suleyman, Charles
  Beattie, Stig Petersen, Shane Legg, Volodymyr Mnih, Koray Kavukcuoglu, and
  David Silver.
\newblock Massively parallel methods for deep reinforcement learning.
\newblock {\em CoRR}, abs/1507.04296, 2015.

\bibitem[\protect\citeauthoryear{Osband \bgroup \em et al.\egroup
  }{2016a}]{DBLP:journals/corr/OsbandBPR16}
Ian Osband, Charles Blundell, Alexander Pritzel, and Benjamin~Van Roy.
\newblock Deep exploration via bootstrapped {DQN}.
\newblock {\em CoRR}, abs/1602.04621, 2016.

\bibitem[\protect\citeauthoryear{Osband \bgroup \em et al.\egroup
  }{2016b}]{Osband2016a}
Ian Osband, Benjamin {Van Roy}, and Zheng Wen.
\newblock Generalization and exploration via randomized value functions.
\newblock {\em International Conference on Machine Learning}, pages 1--26,
  2016.

\bibitem[\protect\citeauthoryear{Ostrovski \bgroup \em et al.\egroup
  }{2017}]{DBLP:journals/corr/OstrovskiBOM17}
Georg Ostrovski, Marc~G. Bellemare, A{\"{a}}ron van~den Oord, and R{\'{e}}mi
  Munos.
\newblock Count-based exploration with neural density models.
\newblock {\em CoRR}, abs/1703.01310, 2017.

\bibitem[\protect\citeauthoryear{Schulman \bgroup \em et al.\egroup
  }{2015}]{DBLP:journals/corr/SchulmanLMJA15}
John Schulman, Sergey Levine, Philipp Moritz, Michael~I. Jordan, and Pieter
  Abbeel.
\newblock Trust region policy optimization.
\newblock {\em CoRR}, abs/1502.05477, 2015.

\bibitem[\protect\citeauthoryear{Stadie \bgroup \em et al.\egroup
  }{2015}]{DBLP:journals/corr/StadieLA15}
Bradly~C. Stadie, Sergey Levine, and Pieter Abbeel.
\newblock Incentivizing exploration in reinforcement learning with deep
  predictive models.
\newblock {\em CoRR}, abs/1507.00814, 2015.

\bibitem[\protect\citeauthoryear{Strehl and Littman}{2004}]{1374179}
A.~L. Strehl and M.~L. Littman.
\newblock An empirical evaluation of interval estimation for {M}arkov decision
  processes.
\newblock In {\em 16th IEEE International Conference on Tools with Artificial
  Intelligence}, pages 128--135, 2004.

\bibitem[\protect\citeauthoryear{Strehl and Littman}{2008}]{Strehl2008}
Alexander~L Strehl and Michael~L Littman.
\newblock An analysis of model-based interval estimation for {M}arkov decision
  processes.
\newblock {\em Journal of Computer and System Sciences}, 74(8):1309--1331,
  2008.

\bibitem[\protect\citeauthoryear{Strehl \bgroup \em et al.\egroup
  }{2009}]{Strehl2009}
Alexander~L Strehl, Lihong Li, and Michael~L Littman.
\newblock {Reinforcement Learning in Finite MDPs : PAC Analysis}.
\newblock {\em Journal of Machine Learning Research}, 10:2413--2444, 2009.

\bibitem[\protect\citeauthoryear{Sutton and Barto}{1998}]{Sutton1998}
R.S. Sutton and A.G. Barto.
\newblock {Reinforcement Learning: An Introduction}.
\newblock {\em IEEE Transactions on Neural Networks}, 9(5):1054--1054, 1998.

\bibitem[\protect\citeauthoryear{Sutton}{1990}]{Sutton:1990:IAL:101883.102055}
Richard~S. Sutton.
\newblock Integrated architecture for learning, planning, and reacting based on
  approximating dynamic programming.
\newblock In {\em International Conference on Machine Learning}, pages
  216--224, 1990.

\bibitem[\protect\citeauthoryear{Tang \bgroup \em et al.\egroup
  }{2016}]{Tang2016exploration}
Haoran Tang, Rein Houthooft, Davis Foote, Adam Stooke, Xi~Chen, Yan Duan, John
  Schulman, Filip~De Turck, and Pieter Abbeel.
\newblock {\#}{E}xploration: {A} study of count-based exploration for deep
  reinforcement learning.
\newblock {\em CoRR}, abs/1611.04717, 2016.

\bibitem[\protect\citeauthoryear{van Hasselt \bgroup \em et al.\egroup
  }{2016a}]{DBLP:journals/corr/HasseltGHS16}
Hado van Hasselt, Arthur Guez, Matteo Hessel, and David Silver.
\newblock Learning values across many orders of magnitude.
\newblock {\em CoRR}, abs/1602.07714, 2016.

\bibitem[\protect\citeauthoryear{van Hasselt \bgroup \em et al.\egroup
  }{2016b}]{HGS:2016doubleQ}
Hado van Hasselt, Arthur Guez, and David Silver.
\newblock Deep reinforcement learning with double {Q}-learning.
\newblock In {\em AAAI}, 2016.

\bibitem[\protect\citeauthoryear{Veness \bgroup \em et al.\egroup
  }{2012}]{veness2012context}
Joel Veness, Kee~Siong Ng, Marcus Hutter, and Michael Bowling.
\newblock Context tree switching.
\newblock In {\em IEEE Data Compression Conference}, pages 327--336, 2012.

\bibitem[\protect\citeauthoryear{Wang \bgroup \em et al.\egroup
  }{2016}]{WFL:2015DQN}
Ziyu Wang, Nando de~Freitas, Tom Schaul, Matteo Hessel, Hado van Hasselt, and
  Marc Lanctot.
\newblock Dueling network architectures for deep reinforcement learning.
\newblock In {\em International Conference on Machine Learning}, 2016.

\end{thebibliography}

\end{document}